\newcommand{\bigO}[1]{\mathcal{O}(#1)}
\newcommand{\mc}[1]{\mathcal{#1}}
\newcommand{\vts}[1]{\lvert #1 \rvert}
\newcommand{\Vts}[1]{\lVert #1 \rVert}
\newcommand{\bb}[1]{\mathbb{#1}}
\newcommand\Tstrut{\rule{0pt}{2.6ex}}         
\newcommand\Bstrut{\rule[-1.3ex]{0pt}{0pt}}   
\newcommand\footnoteref[1]{\protected@xdef\@thefnmark{\ref{#1}}\@footnotemark}
\newcommand{\shorteq}{%
  \settowidth{\@tempdima}{-}
  \resizebox{\@tempdima}{\height}{=}%
}
\theoremstyle{plain}
\newtheorem{theorem}{Theorem}[section]
\newtheorem{definition}{Definition}[section]
\newtheorem{problem}{Problem}[section]
\definecolor{sg}{HTML}{00ff7f}
\definecolor{lb}{HTML}{b0f5ef}
\definecolor{lg}{HTML}{9bfaa8}
\begin{document}

\title{Forecasting Trajectory and Behavior of Road-Agents Using Spectral Clustering in Graph-LSTMs }

\author{Rohan Chandra$^{1}$, Tianrui Guan$^{1}$, Srujan Panuganti$^{1}$, Trisha Mittal$^{1}$,\\
Uttaran Bhattacharya$^{1}$, Aniket Bera$^{1}$, Dinesh Manocha$^{1,2}$\\
{\small University of Maryland, College Park}\\
{\small \url{https://gamma.umd.edu/spectralcows}}
}

\maketitle
\begin{abstract}
We present a novel approach for traffic forecasting in urban traffic scenarios using a combination of spectral graph analysis and deep learning. We predict both the low-level information (future trajectories) as well as the high-level information (road-agent behavior) from the extracted trajectory of each road-agent. Our formulation represents the proximity between the road agents using a weighted dynamic geometric graph (DGG). We use a two-stream graph-LSTM network to perform traffic forecasting using these weighted DGGs. The first stream predicts the spatial coordinates of road-agents, while the second stream predicts whether a road-agent is going to exhibit overspeeding, underspeeding, or neutral behavior by modeling spatial interactions between road-agents. Additionally, we propose a new regularization algorithm based on spectral clustering to reduce the error margin in long-term prediction (3-5 seconds) and improve the accuracy of the predicted trajectories. Moreover, we prove a theoretical upper bound on the regularized prediction error. We evaluate our approach on the Argoverse, Lyft, Apolloscape, and NGSIM datasets and highlight the benefits over prior trajectory prediction methods. In practice, our approach reduces the average prediction error by approximately $75$\% over prior algorithms and achieves a weighted average accuracy of $91.2$\% for behavior prediction. Additionally, our spectral regularization improves long-term prediction by up to $70\%$.
\end{abstract}
\section{Introduction}

Autonomous driving is an active area of research and includes many issues related to navigation~\cite{ernest_nav}, trajectory prediction~\cite{traphic}, and behavior understanding~\cite{chandra2019graphrqi, pnas}.
Trajectory prediction is the problem of predicting the short-term (1-3 seconds) and long-term (3-5 seconds) spatial coordinates of various road-agents such as cars, buses, pedestrians, rickshaws, and even animals, etc. Accurate trajectory prediction is crucial for safe navigation. Furthermore, road-agents have different dynamic behaviors that may correspond to aggressive or conservative driving styles~\cite{humanfactor1,humanfactor2,humanfactor3}. While humans can very quickly predict different road-agent behaviors commonly observed in traffic, current autonomous vehicles (AVs) are unable to perform efficient navigation in dense and heterogeneous traffic due to their inability to recognize road-agent behaviors.
\begin{figure}[t]
    \centering
    \includegraphics[width = \columnwidth]{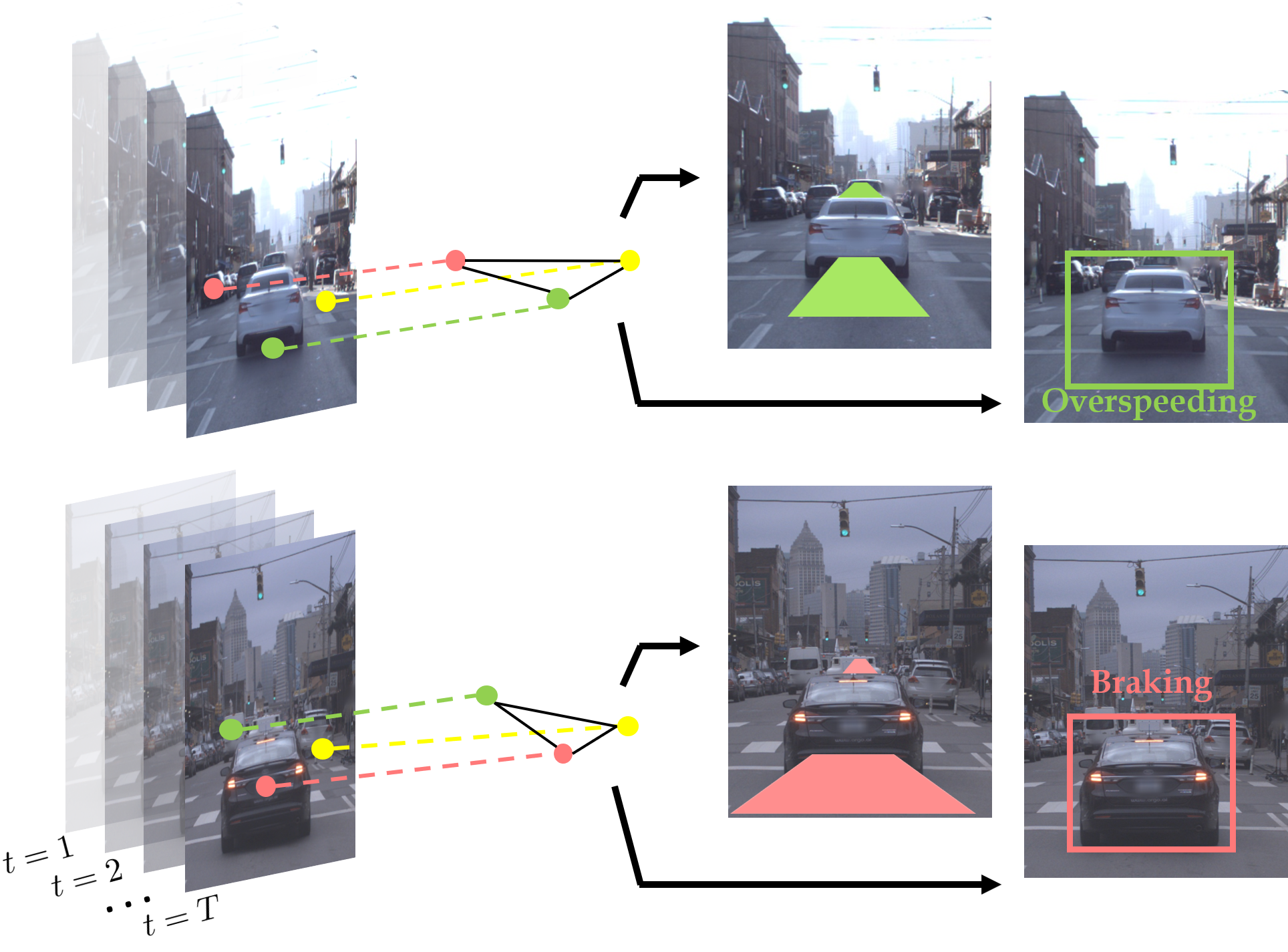}
    \caption{\textbf{Trajectory and Behavior Prediction:} We predict the long-term (3-5 seconds) trajectories of road-agents, as well as their behavior (e.g. overspeeding, underspeeding, etc.), in urban traffic scenes. Our approach represents the spatial coordinates of road-agents (colored points in the image) as vertices of a DGG to improve long-term prediction using a new regularization method. }
    \vspace{-10pt}
    \label{fig:cover}
\end{figure}


While there has been extensive progress in trajectory prediction \mbox{\cite{traphic,nachiket,chandra2019robusttp}}, there has been significantly less research in behavior prediction. The advantage of knowing if a neighboring road-agent is going to overtake another agent or if a road-agent in front is going to brake suddenly is useful for safe navigation. Furthermore, behavior prediction is crucial for making autonomous vehicles socially aware, as opposed to their inherent conservative behavior~\mbox{\cite{brown2017trouble, schwarting2019social, sunberg2017value}} that poses new risks in terms of low efficiency and uncomfortable traveling experiences~\mbox{\cite{seth19traffic}}.

Furthermore, a major challenge in traffic forecasting is ensuring accurate long-term prediction (3-5 seconds). As the prediction horizon increases, the temporal correlations in the data between current and previous time-steps grow weaker, which increases the error-margin of long-term prediction (\cite{anima}, cf. Figure 4 in~\cite{nachiket,traphic}, Figure 3 in~\cite{li2019grip}). 
Some approaches have been developed to reduce the long-term error-margin for trajectory forecasting~\cite{anima}, but they assume knowledge of high-order, non-linear traffic dynamics.

\paragraph{Main Contributions} We present an algorithm for traffic forecasting that disjointedly predicts trajectories as well as road-agent behavior using two separate streams. We represent the inter-road-agent interactions in the traffic using weighted dynamic geometric graphs (DGGs)~\cite{waxman1988routing}, where the vertices represent the road-agents, and the weighted edges are a function of the proximity between the road-agents. Our approach makes no assumptions about the size and shape of the road-agents. Our main contributions include: 

\begin{enumerate}[noitemsep]
    \item A two-stream graph-LSTM network for traffic forecasting in urban traffic.
    The first stream is a conventional LSTM encoder-decoder network that does not account for neighbor vehicles. It is used to predict the spatial coordinates of the future trajectory. We propose a second stream that predicts the eigenvectors of future DGGs, which serve the dual purpose of behavior prediction as well as regularizing the first stream.  
    
    \item To reduce the error of long-term predictions, we propose a new regularization algorithm for sequence prediction models called spectral cluster regularization. 
    \item We derive a theoretical upper bound on the prediction error of the regularized forecasting algorithm in the order of $\bigO{\sqrt{N} \delta_{max}}$, where $N$ is the number of road-agents and $\delta_{max}$ value corresponds to the distance between the two closest road-agents.
    
    \item We present a rule-based behavior prediction algorithm to forecast whether a road-agent is overspeeding (aggressive), underspeeding (conservative), or neutral, based on the traffic behavior classification in psychology literature~\cite{ernestref2, rohanref5}. 
\end{enumerate}
We evaluate our approach on four large-scale urban driving datasets -- NGSIM, Argoverse, Lyft, and Apolloscape. We also perform an exhaustive comparison with the SOTA trajectory prediction methods and report an average RMSE (root mean square error) reduction of approximately 75\% with respect to the next best method. We also achieved a weighted average accuracy of 91.2\% for behavior prediction. Our regularization algorithm improves long-term prediction by up to 70\%. 

\section{Related Work}

Here, we discuss prior work in trajectory prediction, road-agent behavior prediction, and traffic forecasting.

\subsection{Trajectory Prediction}

Trajectory prediction is a well-known problem in statistics~\cite{box2015time}, signal processing~\cite{particle}, and controls and systems engineering~\cite{ljung2001system}. These approaches, however, rely on the knowledge of certain parameters that may not be readily available in traffic videos. In such instances, data-driven methods such as deep learning have become the SOTA for designing trajectory prediction algorithms.

There is some research on trajectory prediction. Deo et al.~\cite{nachiket} combined LSTMs with Convolutional Neural Networks (CNNs) to predict the trajectories of vehicles on sparse U.S. highways. Chandra et al.~\cite{traphic, chandra2019robusttp} proposed algorithms to predict trajectories in urban traffic with high density and heterogeneity. For traffic scenarios with moderate density and heterogeneity, Ma et al.~\cite{ma2018trafficpredict} proposed a method based on reciprocal velocity obstacles. Some additional deep learning-based trajectory prediction methods include~\cite{cnnpredict1,cnnpredict2}. However, these methods only capture road-agent interactions inside a local grid, whereas graph-based approaches such as GRIP~\cite{li2019grip} for trajectory prediction of road-agents and~\cite{cui2018traffic,guo2019attention,Yu2018SpatioTemporalGC,diao2019dynamic} for traffic density prediction consider all interactions independent of local neighborhood restrictions. Our graph representation differs from that of GRIP by storing the graphs of previous time-steps (\ref{subsec: spectral graph theory}). Using our representations, we propose a novel behavior prediction algorithm (\ref{subsec: behavior_protocol}). Additionally, unlike other trajectory prediction methods in the literature, we propose a new Spectral Regularization-based loss function (\ref{sec: spectral_clustering}) that automatically corrects and reduces long-term errors. This is a novel improvement over \textit{all} prior prediction methods that do not handle long-term errors.

\begin{figure*}[t]
    \centering
    \includegraphics[width=.9\textwidth]{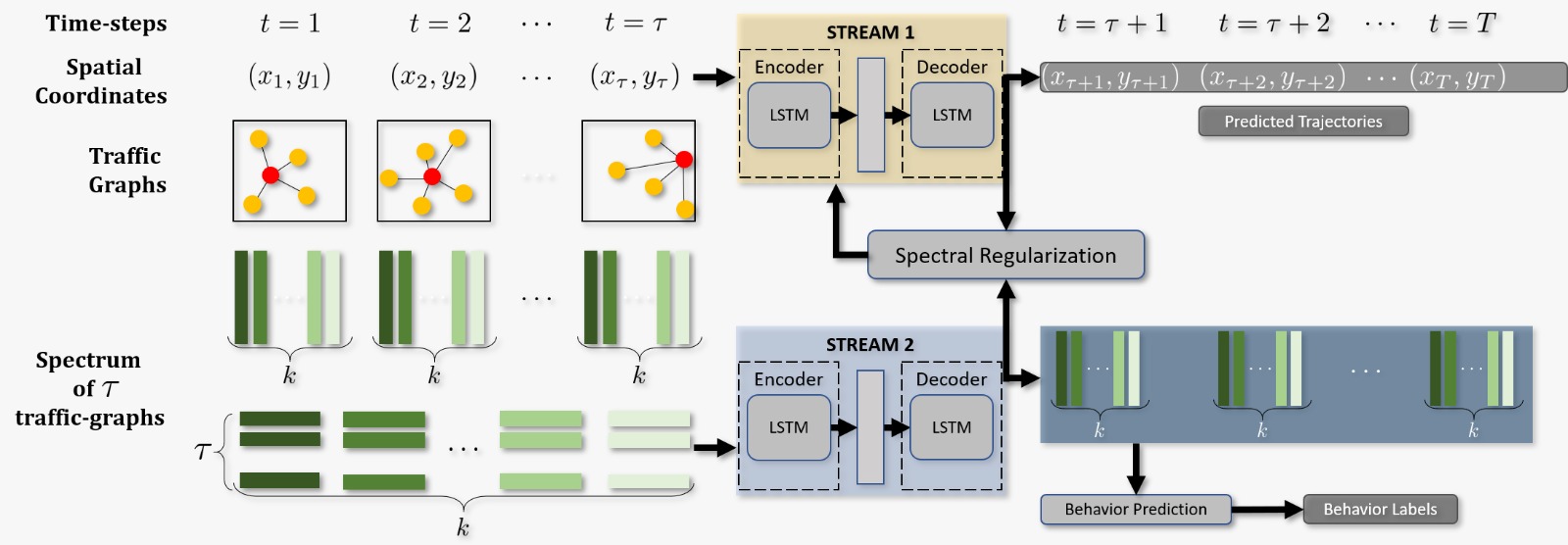}
    \caption{\textbf{Network Architecture}: We show the trajectory and behavior prediction for the $i^{\textrm{th}}$ road-agent (red circle in the DGGs). The input consists of the spatial coordinates over the past $\tau$ seconds as well as the eigenvectors (green rectangles, each shade of green represents the index of the eigenvectors) of the DGGs corresponding to the first $\tau$ DGGs. We perform spectral clustering on the predicted eigenvectors from the second stream to regularize the original loss function and perform back-propagation on the new loss function to improve long-term prediction.}
    \label{fig:network}
    \vspace{-10pt}
\end{figure*}




\subsection{Road-Agent Behavior Prediction}

Current autonomous vehicles lack social awareness due to their inherent conservative behavior~\mbox{\cite{brown2017trouble, schwarting2019social, sunberg2017value}}. Overly conservative behavior present new risks in terms of low efficiency and uncomfortable traveling experiences~\mbox{\cite{seth19traffic}}. Real-world examples of problems caused by AVs that are not socially adaptable can be seen in this video\footnote{\url{https://www.youtube.com/watch?v=Rm8aPR0aMDE}}. The notion of using driver behavior prediction to make the AVs socially aware is receiving attention~\mbox{\cite{schwarting2019social}}.

Current driving behavior modeling methods are limited to traffic psychology studies where predictions for driving behavior are made offline, based on either driver responses to questionnaires or data collected over a period of time. Such approaches are not suitable for online behavior prediction. In contrast, our behavior prediction algorithm is the first computationally online approach that does not depend on offline data and manually tunable parameters. In the remainder of this section, we review some of the prior behavior modeling approaches and conclude by pointing out the advantages of our approach.

Many studies have been performed behavior modeling by identifying factors that contribute to different driver behaviors classes such as aggressive, conservative, or moderate driving. These factors can be broadly categorized into four categories. The first category of factors that indicate road-agent behavior is driver-related. These include characteristics of drivers such as age, gender, blood pressure, personality, occupation, hearing, and so on~\cite{ernestref2, big5, ernestref9}. 
The second category corresponds to environmental factors such as weather or traffic conditions~\cite{behaviorref-category2-1,behaviorref-category2-2}.
The third category refers to psychological aspects that affect driving styles. These could include drunk driving, driving under the influence, state of fatigue, and so on~\cite{behaviorref-category3-2,behaviorref-category3-1}.
The final category of factors contributing to driving behavior are vehicular factors such as positions, acceleration, speed, throttle responses, steering wheel measurements, lane changes, and brake pressure~\cite{ernestref8,ernestref12, ernestref13, ernest, chandra2019graphrqi}.

A recent data-driven behavior prediction approach~\cite{chandra2019graphrqi} also models traffic through graphs. The method predicts the driving behavior by training a neural network on the eigenvectors of the DGGs using supervised machine learning. Apart from behavior modeling, several methods have used machine learning to predict the intent of road-agents~\cite{zyner2018recurrent,zyner2017long}. The proposed behavior prediction algorithm in this paper extends the approach in~\cite{chandra2019graphrqi} by predicting sequences of eigenvectors for future time-steps. Compared to these prior methods, are algorithm is online, computationally tractable and does not depend on any other information other than the vehicle coordinates.

\section{Background and Overview}
In this section, we define the problem statement and give a brief overview of spectral Dynamic Geometric Graphs (DGGs) in the context of road-agents. 

\subsection{Problem Statement}

We first present a definition of a vehicle trajectory:

\begin{definition}
\textbf{Trajectory:} The trajectory for the $i^\textrm{th}$ road agent is defined as a sequence $\Psi_i(a,b) \in \{ \mathbb{R}^2 \}$, where $\Psi_i(a,b) = \left \{ [x_t, y_t]^\top | \ t \in [a, b] \right\}$. $[x,y] \in \bb{R}^2$ denotes the spatial coordinates of the road-agent in meters according to the world coordinate frame and $t$ denotes the time instance.
\end{definition}

We define \textit{traffic forecasting} as solving the following two problem statements, simultaneously, but separately using two separate streams.

\begin{problem}
\textbf{Trajectory Prediction: }In a traffic video with $N$ road agents, given the trajectory $\Psi_i(0, \tau)$, predict $\Psi_i (\tau^+, T)$ for each road-agent $v_i, i\in [0,N]$.
\label{prob: 1}
\end{problem}

\begin{problem}
\textbf{Behavior Prediction:} In a traffic video with $N$ road agents, given the trajectory, $\Psi_i(0, \tau)$, predict a label from the following set, \{ Overspeeding, Neutral, Underspeeding\} for each road-agent $v_i, i\in [0,N]$.
\label{prob: 2}
\end{problem}

\subsection{Weighted Dynamic Geometric Graphs (DGGs)}
\label{subsec: spectral graph theory}

We assume that the trajectories of all the vehicles in the video are provided to us as the input. Given this input, we first construct a DGG~\cite{waxman1988routing} at each time-step. In a DGG, the vehicles represent the vertices and the edge weights are a function of the euclidean distance between the vertices. This function~\cite{belkin2003laplacian} is given by,

\begin{equation}
    f(v_i,v_j) = e^{-d(v_i,v_j)}
    \label{eq: kernel}
\end{equation}

\noindent where $v_i$ and $v_j$ are the $i^{\textrm{th}}$ and $j^{\textrm{th}}$ vertices and $d$ is the euclidean distance function.

We represent traffic at each time instance using a DGG $\mc{G}$ of size $N \times N$, with the spatial coordinates of the road-agent representing the set of vertices $\mc{V} = \{ v_1, v_2, \dots, v_n \}$ and a set of undirected, weighted edges, $\mc{E}$. Two road-agents are said to be connected through an edge if $d(v_i,v_j) < \mu$, where $d(v_i,v_j)$ represents the Euclidean distance between the road-agents and $\mu$ is a heuristically chosen threshold parameter. In our experiments, we choose $\mu=10$ meters, taking into account the typical size of road-agents and the width of the road. 

For a DGG, $\mc{G}$, we define the symmetrical adjacency matrix, $A \in \mathbb{R}^{N \times N}$ as,

\begin{equation}
\resizebox{0.7\columnwidth}{!}{
$A(i,j)=
     \begin{cases}
       e^{-d(v_i,v_j)}  & \text{if $d(v_i,v_j) < \mu,i \neq j$ },\\
      0 &\text{otherwise.}
     \end{cases}$
     }
     \label{eq: similarity_function}
\end{equation}

\noindent Equation~\ref{eq: kernel} denotes the interactions between any two road-agents, $v_i$ and $v_j$. This implies that road-agents at a greater distance are assigned a lower weight, while road-agents in close proximity are assigned a higher weight. This follows the intuition that each road-agent needs to pay more attention to nearby agents than those farther away to avoid collisions.

For the adjacency matrix $A$ at each time instance, the corresponding degree matrix $D \in \mathbb{R}^{N \times N}$ is a diagonal matrix with main diagonal $D(i, i) = \sum_{j=1}^N A(i, j)$ and 0 otherwise. The unnormalized Laplacian matrix $L  = D  - A$ of the graph is defined as the symmetric matrix,

\begin{equation}
\resizebox{0.6\columnwidth}{!}{
$L(i,j) =
     \begin{cases}
       D(i,i)  &\text{if $i=j$},\\
      -e^{-d(v_i,v_j)} &\text{if $d(v_i,v_j) < \mu$}, \\
      0 &\text{otherwise.}
     \end{cases}$
     }
\end{equation}

The Laplacian matrix for each time-step is correlated with the Laplacian matrices for all previous time-steps. Let the Laplacian matrix at a time instance $t$ be denoted as $L_t$. Then, the laplacian matrix for the next time-step, $L_{t+1}$ is given by the following update,

\begin{equation}
\resizebox{0.4\columnwidth}{!}{
$L_{t+1} =
\left[
\begin{array}{c|c}
L_{t} \Bstrut & 0 \Bstrut\\
\hline
0 \Tstrut & 1
\end{array}
\right] + \delta\delta^\top$,
}
\label{eq: A_update}
\end{equation}

\noindent where $\delta\delta^\top$ is a perturbation matrix represented by an outer product of rank 2. Here, $\delta \in \bb{R}^{(N+1) \times 2} $ is a sparse matrix $\Vts{\delta}_0 \ll N$, where N represents the total number of road-agents at time-step $t$. The presence of a non-zero entry in the $j^\textrm{th}$ row of $\delta$ implies that the $j^\textrm{th}$ road-agent has observed a new neighbor, that has now been added to the current DGG. During training time, the size of $L_t$ is fixed for all time $t$ and is initialized as a zero matrix of size $N$x$N$, where $N$ is max number of agents (different $N$ is used for different datasets). For instance, $N=270$ is used for Lyft Level 5 dataset. At current time $t$, if the $N<270$, the zeros in $L_t$ will simply be updated with new values. Once $N=270$, $L_t$ is reset to zero and the process repeats. During test time, trained models for stream 1 predict trajectories based only on past trajectories; these models for stream 1 do not use graphs. Trained model for stream 2, however, generate traffic-graphs in realtime for behavior prediction at test time.
The matrix $U \in \mathbb{R}^{n \times k} := \{ u_j \in \mathbb{R}^{n} | j = 1 \dots k\}$ of eigenvectors of $L$ is called the \emph{spectrum} of $L$, and can be efficiently computed using eigenvalue algorithms.



\section{Trajectory and Behavior Forecasting}
The overall flow of the approach is as follows:

\begin{enumerate}[noitemsep]
    \item Our input consists of the spatial coordinates over the past $\tau$ seconds as well as the eigenvectors of the DGGs corresponding to the first $\tau$ DGGs.
    
    \item \textit{Solving Problem~\ref{prob: 1}: }The first stream accepts the spatial coordinates and uses an LSTM-based sequence model~\cite{graves2013generating} to predict $\Psi_i (\tau^{+}, T)$ for each $v_i, i \in [0,N]$, where $\tau^{+} = \tau + 1$.
    
    \item \textit{Solving Problem~\ref{prob: 2}:} The second stream accepts the eigenvectors of the input DGGs and predicts the eigenvectors corresponding to the DGGs for the next $\tau$ seconds. The predicted eigenvectors form the input to the behavior prediction algorithm in Section~\ref{subsec: behavior_protocol} to assign a behavior label to the road-agent.
    
    \item Stream 2 is used to regularize stream 1 using a new regularization algorithm presented in Section~\ref{sec: spectral_clustering}. We derive the upper bound on the prediction error of the regularized forecasting algorithm in Section~\ref{sec: upper_bound}.
\end{enumerate}

\subsection{Network Overview}

We present an overview of our approach in Figure~\ref{fig:network} and defer the technical implementation details of our network to the supplementary material. Our approach consists of two parallel LSTM networks (or streams) that operate separately.

\textbf{Stream 1:} The first stream is an LSTM-based encoder-decoder network~\cite{graves2013generating} (yellow layer in Figure~\ref{fig:network}). The input consists of the trajectory history, $\Psi_i(0, \tau)$ and output consists of $\Psi_i (\tau^{+}, T)$ for each road-agent $v_i, i\in [0,N]$. 

\textbf{Stream 2:} The second stream is also an LSTM-based encoder-decoder network (blue layer in Figure~\ref{fig:network}). To prepare the input to this stream, we first form a sequence of DGGs, $\left\{\mc{G}_t | \ t\in [0,\tau] \right\}$ for each time instance of traffic until time $\tau$. For each DGG, $\mc{G}_t$, we first compute its corresponding Laplacian matrix, $L_t$ and use SOTA eigenvalue algorithms to obtain the spectrum, $U_t$ consisting of the top $k$ eigenvectors of length $n$. We form $k$ different sequences, $\left\{ \mc{S}_j | \ j\in [0,k] \right \}$, where each $\mc{S}_j = \{u_j\}$ is the set containing the $j^{\textrm{th}}$ eigenvector from each $U_t$ corresponding to the $t^{\textrm{th}}$ time-step, with $\vts{\mc{S}_j} = \tau$. 

The second stream then accepts a sequence, $\mc{S}_j$, as input to predict the $j^{\textrm{th}}$ eigenvectors for the next $T - \tau$ seconds. This is repeated for each $\mc{S}_j$. The resulting sequence of spectrums, $\left\{ \mc{U}_t | \ t\in [\tau^{+} , T] \right \}$ are used to reconstruct the sequence, $\left \{ \mc{L}_t | \ t\in [\tau^{+} , T] \right \}$, which is then used to assign a behavior label to a road-agent, as explained below.

\subsection{Trajectory Prediction}

The first stream is used to solve Problem~\ref{prob: 1}. We clarify at this point that stream 1 does not take into account road-agent interactions. We use spectral clustering (discussed later in Section~\ref{sec: spectral_clustering}) to model these interactions. It is important to further clarify that the trajectories predicted from stream 1 are not affected by the behavior prediction algorithm (explained in the next Section).

\subsection{Behavior Prediction Algorithm}
\label{subsec: behavior_protocol}
We define a rule-based behavior algorithm (blue block in Figure~\ref{fig:network}) to solve Problem~\ref{prob: 2}. This is largely due to the fact that most data-driven behavior prediction approaches require large, well-annotated datasets that contain behavior labels. Our algorithm is based on the predicted eigenvectors of the DGGs of the next $\tau$ seconds.

The degree of $i^{\textrm{th}}$ road-agent, ($\theta_i \leq n$), can be computed from the diagonal elements of the Laplacian matrix $L_t$. $\theta_i$ measures the total number of distinct neighbors with which road-agent $v_i$ has shared an edge connection until time $t$. As $L_t$ is formed by simply adding a row and column to $L_{t-1}$, the degree of each road-agent monotonically increases. Let the rate of increase of $\theta_i$ be denoted as $\theta_i^{'}$. Intuitively, an aggressively overspeeding vehicle will observe new neighbors at a faster rate as compared to a road-agent driving at a uniform speed. Conversely, a conservative road-agent that is often underspeeding at unconventional spots such as green light intersections (Figure~\ref{fig:cover}) will observe new neighbors very slowly. This intuition is formalized by noting the change in $\theta_i$ across time-steps. In order to make sure that slower vehicles (conservative) did not mistakenly mark faster vehicles as new agents, we set a condition where an observed vehicle is marked as `new' if and only if the speed of the observed vehicle is less than the active vehicle (or ego-vehicle). To predict the behavior of the $i^{\textrm{th}}$ road-agent, we follow the following steps:

\begin{enumerate}[noitemsep]
    \item Form the set of predicted spectrums from stream 2,  $\left\{ \mc{U}_t | \ t\in [\tau^{+} , T] \right \}$. We compute the eigenvalue matrix, $\Lambda$, of $L_t$ by applying theorem 5.6 of~\cite{demmel1997applied} to $L_{t-1}$. We explain the exact procedure in the supplemental version.
    \item For each $U_t \in \mc{U}$, compute $L_t = U_t\Lambda U_t^\top$.
    \item $\theta_i = i^{\textrm{th}}$ element of diag($L_t$), where ``diag'' is the diagonal matrix operator.
    \item $\theta_i^{'} = \frac{\Delta \theta_i}{\Delta t}$.
\end{enumerate}

\noindent where $\Lambda$ is the eigenvalue matrix of $L_t$. Based on heuristically pre-determined threshold parameters $\lambda_1$ and $\lambda_2$, we define the following rules
to assign the final behavior label: Overspeeding ($\theta^{'} > \lambda_1$), Neutral ($\lambda_2 \leq \theta^{'} \leq \lambda_1$), and Underspeeding ($\theta^{'} < \lambda_2$). 

Note that since human behavior does not change instantly at each time-step, our approach predicts the behavior over time periods spanning several frames.
\subsection{Spectral Clustering Regularization}
\label{sec: spectral_clustering}

The original loss function of stream 1 for the $i^{\textrm{th}}$ road-agent in an LSTM network is given by,
\begin{equation}
\resizebox{0.65\columnwidth}{!}{
    $F_i = -\sum_{t=1}^T \log Pr(x_{t+1} | \mu_t, \sigma_t, \rho_t)$
    }
    \label{eq: original_loss}
\end{equation}

\noindent Our goal is to optimize the parameters, $\mu^{*}_t, \sigma^{*}_t$, that minimize equation~\ref{eq: original_loss}. Then, the next spatial coordinate is sampled from a search space defined by $\mathcal{N}(\mu^{*}_t, \sigma^{*}_t)$. The resulting optimization forces $\mu_t, \sigma_t$ to stay close to the next spatial coordinate. However, in general trajectory prediction models, the predicted trajectory diverges gradually from the ground-truth, causing the error-margin to monotonically increase as the length of the prediction horizon increases (\cite{anima}, cf. Figure 4 in~\cite{nachiket,traphic}, Figure 3 in~\cite{li2019grip}). The reason for this may be that while equation~\ref{eq: original_loss} ensures that $\mu_t, \sigma_t$ stays close to the next spatial coordinate, it does not, however, guarantee the same for $\hat x_{t+1} \sim \mathcal{N}(\mu_t, \sigma_t)$. Our solution to this problem involves regularizing equation~\ref{eq: original_loss} by adding appropriate constraints on the parameters, $\mu_t, \sigma_t$, such that sampled coordinates from $\mathcal{N}(\mu^{*}_t, \sigma^{*}_t)$ are close to the ground-truth trajectory.

We assume the ground-truth trajectory of a road-agent to be equivalent to their ``preferred" trajectory, which is defined as the trajectory a road-agent would have taken in the absence of other dynamic road-agents. Preferred trajectories can be obtained by minimizing the Dirichlet energy of the DGG, which in turn can be achieved through spectral clustering on the road-agents~\cite{dirichlet}. Our regularization algorithm (shown in the yellow arrow in Figure~\ref{fig:network}) is summarized below. For each road-agent, $v_i$: 

\begin{enumerate}[noitemsep]
    \item The second stream computes the spectrum sequence, $\{U_{T+1}, \ldots, U_{T+\tau}\}$.
    \item For each $U$, perform spectral clustering~\cite{von2007tutorial} on the eigenvector corresponding to the second smallest eigenvalue.
    \item Compute cluster centers from the clusters obtained in the previous step.
    \item Identify the cluster to which $v_i$ belongs and retrieve the cluster center, $\mu_c$ and deviation, $\sigma_c$. 
\end{enumerate}
Then for each road-agent, $v_i$, the regularized loss function, $F^{\textrm{reg}}_i$, for stream 1 is given by,

\begin{equation}
\resizebox{.9\columnwidth}{!}{
    $\sum_{t=1}^T \Big (- \log Pr(\hat y_{t+1} | \mu_t, \sigma_t, \rho_t \Big ) + b_1\Vts{\mu_t - \mu_c}_2 + b_2\Vts{\sigma_t - \sigma_c}_2$
    }
    \label{eq: oregularized_loss}
\end{equation}

\noindent where $b_1 = b_2=0.5$ are regularization constants. The regularized loss function is used to backpropagate the weights corresponding to $\mu_t$ in stream 1. Note that $F^{\textrm{reg}}_i$ resembles a Gaussian kernel. This makes sense as the Gaussian kernel models the Euclidean distance non-linearly -- greater the Euclidean distance, smaller the Gaussian kernel value and vice versa. This behavior is similarly captured by Equation~\ref{eq: kernel}). Furthermore, we can use Equation~\ref{eq: oregularized_loss} to predict multiple modes\cite{nachiket} by computing maneuver probabilities using $\mu, \sigma$ following the approach in Section 4.3 of~\cite{nachiket}.

\section{Upper Bound for Prediction Error}
\label{sec: upper_bound}
In this section, we derive an upper bound on the prediction error, $\phi_j$, of the first stream as a consequence of spectral regularization. We present our main result as follows,

\begin{theorem}
    $\phi_j \leq \frac{  \Vts{\delta_t\delta_t^\top}_2  }{  min(\lambda_j, \Lambda)  }$, where $min(\lambda_j, \Lambda)$ denotes the minimum distance between $\lambda_j$ and $\lambda_k \in \Lambda \setminus {\lambda_j}$.
    \label{eq: theoretical_bound}
\end{theorem}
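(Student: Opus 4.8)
The plan is to recognize $\phi_j$ as the displacement of the $j$-th eigenvector of the Laplacian between consecutive time-steps, and to control it using classical symmetric-eigenvector perturbation theory. The regularized loss in Equation~\ref{eq: oregularized_loss} drives the predicted coordinates of stream~1 toward the cluster centers $\mu_c$, which are obtained by spectral clustering on the eigenvectors produced by stream~2. Consequently, any error injected into the prediction through regularization is governed by how far these eigenvectors shift from one step to the next. The update in Equation~\ref{eq: A_update} makes this shift explicit: $L_{t+1}$ is the zero-padded $L_t$ plus the additive, rank-$2$ perturbation $\delta_t\delta_t^\top$. The problem therefore reduces to bounding the motion of an eigenvector of a symmetric matrix under a symmetric perturbation.

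First I would fix the $j$-th eigenpair $(\lambda_j, u_j)$ of $L_t$ and write $L_{t+1} = \widetilde{L}_t + \delta_t\delta_t^\top$, where $\widetilde{L}_t$ is the $(N+1)\times(N+1)$ zero-padding of $L_t$ (so $\widetilde{L}_t$ carries the spectrum of $L_t$ augmented by the eigenvalue $1$ from the new diagonal entry). Treating $E := \delta_t\delta_t^\top$ as the perturbation, I would invoke the first-order eigenvector expansion (equivalently, Theorem~5.6 of~\cite{demmel1997applied}, applied as in the behavior-prediction step), which writes the perturbed eigenvector as $u_j + \sum_{k\neq j}\frac{u_k^\top E\, u_j}{\lambda_j-\lambda_k}\,u_k$. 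Setting $\phi_j = \Vts{u_j^{(t+1)} - u_j^{(t)}}_2$ and using orthonormality of the $\{u_k\}$, the squared displacement is $\sum_{k\neq j}\frac{(u_k^\top E u_j)^2}{(\lambda_j-\lambda_k)^2}$. Pulling out the smallest gap gives $\phi_j^2 \le \frac{1}{\min(\lambda_j,\Lambda)^2}\sum_{k\neq j}(u_k^\top E u_j)^2 \le \frac{\Vts{E u_j}_2^2}{\min(\lambda_j,\Lambda)^2} \le \frac{\Vts{E}_2^2}{\min(\lambda_j,\Lambda)^2}$, where the final step uses $\Vts{u_j}_2=1$ and the definition of the spectral norm. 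Taking square roots and substituting $E=\delta_t\delta_t^\top$ yields exactly $\phi_j \le \Vts{\delta_t\delta_t^\top}_2/\min(\lambda_j,\Lambda)$.

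The hard part will be the conceptual bridge rather than the linear algebra: I must justify that the \emph{trajectory} error of stream~1 inherited through regularization is proportional to this \emph{eigenvector} displacement. This follows because the regularizer penalizes $\Vts{\mu_t-\mu_c}_2$, and $\mu_c$ is a Lipschitz function of the Fiedler vector used for clustering, so a shift of that eigenvector by $\phi_j$ moves the cluster center, and hence the regularization target, by a comparable amount. A secondary subtlety is the dimension change in Equation~\ref{eq: A_update}: the padding must be handled so that $u_j^{(t)}$ and $u_j^{(t+1)}$ live in the same space, and the augmented eigenvalue $1$ is treated consistently inside the gap $\min(\lambda_j,\Lambda)$. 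Finally, since the expansion above is first-order, I would either argue that the $\bigO{\Vts{E}_2^2}$ remainder is negligible for the sparse perturbation $\Vts{\delta}_0\ll N$, or replace the expansion with the exact Davis--Kahan $\sin\theta$ bound to make the inequality rigorous for arbitrary perturbation size.
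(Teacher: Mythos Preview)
Your proposal is correct and follows essentially the same route as the paper: both identify $\phi_j$ as the distance between the $j$-th eigenvectors of $L_t$ and $L_{t+1}$, both read off from Equation~\ref{eq: A_update} that the perturbation is the symmetric rank-$2$ matrix $\delta_t\delta_t^\top$, and both bound the eigenvector motion via Demmel's perturbation theory. The paper's argument is terser: it interprets $\phi_j$ as the \emph{angle} between successive eigenvectors and invokes Theorem~5.4 of~\cite{demmel1997applied} directly (numerator equal to the norm of the error matrix, denominator equal to the eigengap), whereas you derive the same inequality from the first-order expansion of Theorem~5.6 and work the orthonormality step out explicitly. Your additional care about the dimension change under zero-padding and the Davis--Kahan alternative for a non-asymptotic statement goes beyond what the paper's own proof supplies. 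The one place you over-engineer is the ``conceptual bridge'' paragraph: in the paper, $\phi_j$ is \emph{defined} as the eigenvector error distance, and the link to the stream-1 prediction error is asserted at the level of the section introduction rather than proved inside the theorem, so no Lipschitz argument for the cluster center is needed to match the paper's proof.
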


\begin{proof}

At time instance $t$, the Laplacian matrix, $L_t$, its block form,\resizebox{0.18\columnwidth}{!}{
$
\left[ 
\begin{array}{c|c}
L_{t} \Bstrut & 0 \Bstrut\\
\hline
0 \Tstrut & 1
\end{array}
    \right ]$}, denoted as block($L_t$), and the laplacian matrix for the next time-step, $L_{t+1}$ are described by Equation~\ref{eq: A_update}. We compute the eigenvalue matrix, $\Lambda$, of $L_t$ by applying theorem 5.6 of~\cite{demmel1997applied} to $L_{t-1}$. 
    
    LSTMs make accurate sequence predictions if elements of the sequence are correlated across time, as opposed to being generated randomly. In a general sequence of eigenvectors, the eigenvectors may not be correlated across time. Consequently, it is difficult for LSTM networks to predict the sequence of eigenvectors, $\mc{U}$ accurately. This may adversely affect the behavior prediction algorithm described in Section~\ref{subsec: behavior_protocol}. Our goal is now to show there exist a correlation between Laplacian matrices across time-steps and that this correlation is lower-bounded, that is, there exist sufficient correlation for accurate sequence modeling of eigenvectors.

    Proving a lower-bound for the correlation is equivalent to proving an upper-bound for the noise, or error distance, between the $j^{\textrm{th}}$ eigenvectors of $L_t$ and $L_{t+1}$. We denote this error distance through the angle $\phi_j$. From Theorem 5.4 of~\cite{demmel1997applied}, the numerator of bound corresponds to the frobenius norm of the error between $L_t$ and $L_{t+1}$. In our case, the update to the Laplacian matrix is given by Equation~\ref{eq: A_update} where the error matrix is $\delta \delta^\top$.
\end{proof}

\noindent In Theorem~\ref{eq: theoretical_bound}, $\phi_j \ll 1$ and $\delta$ is defined in equation~\ref{eq: A_update}. $\lambda_j$ represents the $j^{\textrm{th}}$ eigenvalue and $\Lambda$ represents all the eigenvalues of $L_t$. If the maximum component of $\delta_t$ is $\delta_{max}$, then $\phi_j = \bigO{\sqrt{N} \delta_{max}}$.
Theorem~\ref{eq: theoretical_bound} shows that in a sequence of $j^{\textrm{th}}$ eigenvectors, the maximum angular difference between successive eigenvectors is bounded by $\bigO{\sqrt{N} \delta_{max}}$. By setting $N=270$ (number of road-agents in Lyft), and $\delta_{max} \coloneqq e^{-3} = 0.049$ (width of a lane), we observe a theoretical upper bound of $0.8$ meters. A smaller value of $\phi_j$ indicates a greater similarity between successive eigenvectors, thereby implying a greater correlation in the sequence of eigenvectors. This allows sequence prediction models to learn future eigenvectors efficiently.

An alternative approach to computing the spectrums $\{U_{T+1}, \ldots, U_{T+\tau}\}$ is to first form traffic-graphs from the predicted trajectory given as the output from the stream 1. After obtaining the corresponding Laplacian matrices for these traffic-graphs, standard eigenvalue algorithms can be used to compute the spectrum sequence. This is, however, a relatively sub-optimal approach as in this case, $\phi = \bigO{N L_{max}}$, with $L_{max} \gg \delta_{max}$.

\begin{table*}
\centering
\caption{\textbf{Main Results:} We report the Average Displacement Error (ADE) and Final Displacement Error (FDE) for prior road-agent trajectory prediction methods in meters (m). Lower scores are better and \textbf{bold} indicates the SOTA. We used the original implementation and results for GRIP~\mbox{\cite{li2019grip}} and Social-GAN~\cite{social-gan}. `-' indicates that results for that particular dataset are not available. \textbf{Conclusion:} Our spectrally regularized method (``S1 + S2'') outperforms the next best method (GRIP) by upto $70\%$ as well as the ablated version of our method (``S1 Only'') by upto $75\%$.}
\resizebox{.9\linewidth}{!}{%
\begin{tabular}{lcccccccccccc}
\toprule[1.1pt]
Dataset (Pred. Len.)   &
\multicolumn{8}{c}{Comaprison Methods}
&
\multicolumn{2}{c}{Ablation}
& \multicolumn{2}{c}{\textbf{Our Approach}}            \\
\midrule
                    &
\multicolumn{2}{c}{CS-LSTM }         &
\multicolumn{2}{c}{TraPHic  }         &
\multicolumn{2}{c}{Social-GAN  }         &
\multicolumn{2}{c}{GRIP  }         &
\multicolumn{2}{c}{S1 Only }  &
\multicolumn{2}{c}{S1 + S2 \Tstrut }            \\
&
ADE & FDE &
ADE & FDE &
ADE & FDE &
ADE & FDE &
ADE & FDE &
ADE & FDE \\
\midrule
Lyft (5 sec.)           &   4.423  &  8.640  & 5.031  &  9.882   &  7.860  &  14.340  & -  &  - & 5.77  &  11.20  &  \textbf{2.65} &  \textbf{2.99}\\ 
Argoverse (5 sec.)      &    1.050  &  3.085  & 1.039  &  3.079   &  3.610   &  5.390  & -  &- &  2.40 &  3.09  &  \textbf{0.99} &  \textbf{1.87} \\ 
Apolloscape (3 sec.)    &    2.144  &  11.699  & 1.283  &  11.674  & 3.980  & 6.750  & 1.25 &  2.34  & 2.14  & 9.19  &  \textbf{1.12} &  \textbf{2.05}\\ 
NGSIM (5 sec.)  &   7.250&10.050  & 5.630&9.910  & 5.650&10.290  & 1.61  &  3.16  &  1.31  &  2.98  &\textbf{0.40}  & \textbf{1.08}\\ 
\bottomrule[1.1pt]
\end{tabular}
}
\label{tab: accuracy}
\end{table*}

\section{Experiments and Results}

We begin by listing the datasets used in our approach in Section~\ref{subsec: datasets}. We list the evaluation metrics used and methods compared within Section~\ref{subsec: eval_metrics_methods}. We analyze the main results and discuss the results of comparison methods and ablation studies of our approach in Section~\mbox{\ref{subsec: analysis_and_discussion}}. In Section~\mbox{\ref{subsec: upper_bound_analysis}}, we analyse the theoretical upper bound in the context of long-term prediction. We present an ablation analysis of the radius parameter $\mu$ in Section~\mbox{\ref{subsec: mu_ablation}}.
We make all the implementation and training details available in the supplementary material.

\subsection{Datasets}
\label{subsec: datasets}

We use both sparse (NGSIM~\cite{ngsim}) as well as dense (Lyft Level 5~\cite{lyft2019}, Argoverse Motion Forecasting~\cite{Argoverse}, and the Apolloscape Trajectory~\cite{ma2018trafficpredict}) trajectory prediction datasets for evaluation. We give a brief description of all the datasets in the supplemental version.

\subsection{Evaluation Metrics and Methods}
\label{subsec: eval_metrics_methods}
\subsubsection{Metrics}
For trajectory prediction, we use the standard metrics followed by prior trajectory prediction approaches~\cite{social-lstm,social-gan,traphic,nachiket,chandra2019robusttp}.
\begin{enumerate}[noitemsep]
    \item Average Displacement Error (ADE): The root mean square error (RMSE) of all the predicted positions and real positions during the prediction window.
    \item Final Displacement Error (FDE): The RMSE distance between the final predicted positions at the end of the predicted trajectory and the corresponding true location.
\end{enumerate}

For behavior prediction, we report a weighted classification accuracy (W.A.) over the 3 class labels: \{overspeeding, neutral, underspeeding\}. 

\subsubsection{Methods} 

We compare our approach with SOTA trajectory prediction approaches for road-agents. Our definition of SOTA is not limited to ADE/FDE values. We consider SOTA additionally with respect to the deep learning architecture used in a different approach.
Combined, our basis for selecting SOTA methods not only evaluates the ADE/FDE scores but also evaluates the benefits of using the two-stream network versus other deep learning-based architectures.

\begin{itemize}[noitemsep]
    \item Deo et al.~\cite{nachiket} (CS-LSTM): This method combines CNNs with LSTMs to perform trajectory prediction on U.S. highways.
    \item Chandra et al.~\cite{traphic} (TraPHic): This approach also uses a CNN + LSTM approach along with spatial attention-based pooling to perform trajectory prediction of road-agents in dense and heterogeneous traffic.
    \item Gupta et al.~\cite{social-gan} (Social-GAN): This GAN-based trajectory prediction approach is originally trained on pedestrian crowd datasets. The method uses the encoder-decoder architecture to act as the generator and trains an additional encoder as the discriminator.   
    \item Li et al.~\cite{li2019grip} (GRIP): This is a graph-based trajectory prediction approach that replaces standard CNNs with graph convolutions and combines GCNs with an encoder-decoder framework. 
\end{itemize}

We use the publicly available implementations for CS-LSTM, TraPHic, and Social-GAN, and train the entire model on all three datasets. We performed hyper-parameter tuning on all three methods and reported the best results. Moreover, we compare with the officially published results for GRIP as reported on the NGSIM~\mbox{\cite{li2019grip}} and the Apolloscape datasets only\footnote{\url{http://apolloscape.auto/leader_board.html}}.

\subsection{Analysis and Discussion}
\label{subsec: analysis_and_discussion}

We compare the ADE and FDE scores of our predicted trajectories with prior methods in Table~\ref{tab: accuracy} and show qualitative results in the supplementary material. We compare with several SOTA trajectory prediction methods and reduce the average RMSE by approximately 75\% with respect to the next best method (GRIP). 

\textbf{Ablation Study of Stream 1 (S1 Only) vs. Both Streams (S1 + S2):} To highlight the benefit of the spectral cluster regularization on long-term prediction, we remove the second stream and only train the LSTM encoder-decoder model (Stream 1) with the original loss function (equation~\mbox{\ref{eq: original_loss}}). Our results (Table~\mbox{\ref{tab: accuracy}}, last four columns) show that regularizing stream 1 reduces the FDE by up to 70\%. This is as expected since stream 1 does not take into account neighbor information. Therefore, it should also be noted that stream 1 performs poorly in dense scenarios but rather well in sparse scenarios. This is evident from Table~\mbox{\ref{tab: accuracy}} where stream 1 outperforms comparison methods on the sparse NGSIM dataset with ADE less than $1$m.

Additionally, Figure~\mbox{\ref{fig: rmse}} shows that in the presence of regularization, the RMSE for our spectrally regularized approach (``both streams", purple curve) is much lower than that of stream 1 (red curve) across the entire prediction window.

\textbf{RMSE depends on traffic density: } The upper bound for the increase in RMSE error is a function of the density of the traffic since $\phi = \bigO{\sqrt{N} \delta_{max}}$, where $N$ is the total number of agents in the traffic video and $\delta_{max} = 0.049$ meters for a three-lane wide road system. The NGSIM dataset contains the sparsest traffic with the lowest value for $N$ and therefore the RMSE values are lower for the NGSIM ($0.40/1.08$) compared to the other three datasets that contain dense urban traffic.

\textbf{Comparison with other methods:} Our method learns weight parameters for a spectral regularized LSTM network (Figure~\mbox{\ref{fig:network}}), while GRIP learns parameters for a graph-convolutional network (GCN). We outperform GRIP on the NGSIM and Apolloscape datasets, while comparisons on the remaining two datasets are unavailable. TraPHic and CS-LSTM are similar approaches. Both methods require convolutions in a heuristic local neighborhood. The size of the neighborhood is specifically adjusted to the dataset that each method is trained on. We use the default neighborhood parameters provided in the publicly available implementations, and apply them to the NGSIM, Lyft, Argoverse, and Apolloscape datasets. We outperform both methods on all benchmark datasets. Lastly, Social-GAN is trained on the scale of pedestrian trajectories, which differs significantly from the scale of vehicle trajectories. This is primarily the reason behind Social-GAN placing last among all methods.

\subsection{Long-Term Prediction Analysis}
\label{subsec: upper_bound_analysis}

The goal of improved long-term prediction is to achieve a lower FDE, as observed in our results in Table~\mbox{\ref{tab: accuracy}}. We achieve this goal by successfully upper-bounding the worst-case maximum FDE that can theoretically be obtained. These upper bounds are a consequence of the theoretical results in Section~\mbox{\ref{sec: upper_bound}}. We denote the worst-case theoretical FDE by T-FDE. This measure represents the maximum FDE that can be obtained using Theorem~\mbox{\ref{eq: theoretical_bound}} under fixed assumptions. In Table\mbox{~\ref{tab: upper_bound}}, we compare the T-FDE with the empirical FDE results obtained in Table~\mbox{\ref{tab: accuracy}}. The T-FDE is computed by,

\begin{equation}
    \textrm{T-FDE} = \frac{\phi}{n}\times (T-\tau)
    \label{eq: T-FDE}
\end{equation}

\noindent The formula for T-FDE is derived as follows. The RMSE error incurred by all vehicles at a current time-step during spectral clustering is bounded by $\phi$ (Theorem\mbox{~\ref{eq: theoretical_bound}}). Let $n = \frac{N}{T} = 10$ be the average number of vehicles per frame in each dataset. Then, at a single instance in the prediction window,
the increase in RMSE for a single agent is bounded by $\frac{\phi}{n}$. As $T-\tau$ is the length of the prediction window, the total increase in RMSE over the entire prediction window is given by $\textrm{T-FDE} = \frac{\phi}{n}\times (T-\tau)$. We do not have the data needed to compute $\phi$ for the NGSIM dataset as the total number of lanes are not known.

        

\begin{table}
  \caption{\textbf{Upper Bound Analysis:} $\phi$ is the upper bound on the RMSE for all agents at a time-step. $T-\tau$ is the length of the prediction window. T-FDE~(Eq. \mbox{\ref{eq: T-FDE}}) is the theoretical FDE that should be achieved by using spectral regularization. The FDE results are obtained from Table~\mbox{\ref{tab: accuracy}}. The \% agreement is the agreement between the T-FDE and FDE computed using $\frac{\textrm{T-FDE}}{\textrm{FDE}}$ if T-FDE$<$FDE, else 100\%. \textit{Conclusion:} Theorem~\mbox{\ref{eq: theoretical_bound}} is empirically verified with at least 73\% guarantee.}
  \centering
  \resizebox{\columnwidth}{!}{%
  \begin{tabular}{lccccc}
  \toprule
  Dataset  & $\phi$ &  ($T-\tau$)& T-FDE &  FDE & \% Agreement\\
    \midrule
    Lyft Level 5 & 0.80 &30 & 2.46 & 2.99&82\%\\
    Apolloscape & 1.50 & 10 & 1.50 & 2.05 & 73\%\\
    Argoverse & 0.64 &30  & 1.95 & 1.87 & 100\% \\
    \bottomrule
  \end{tabular}
  \label{tab: upper_bound}
  }
 
  \vspace{-10pt}
\end{table}

We note a $73\%, 82\%, 100\%$ agreement between the theoretical FDE and the empirical FDE on the Apolloscape, Lyft, and Argoverse datasets, respectively. The main cause for disagreements in the first two datasets is the choice for the value of $\delta_\textrm{max} = 0.049$ during the computation of $\phi$. This value is obtained for a three-lane wide road system that was observed in majority of the videos in both datasets. However, it may be the case that several videos contain one- or two-lane traffic. In such cases, the values for $\delta_\textrm{max}$ changes to $0.36$ and $0.13$, respectively, thereby increasing the upper bound for increase in RMSE.

Note, in Figure~\mbox{\ref{fig: rmse}}, the increase in RMSE for our approach (purple curve) is much lower than that of other methods, which is due to the upper bound induced by spectral regularization.


\begin{figure}
    \centering
    \includegraphics[width=\columnwidth]{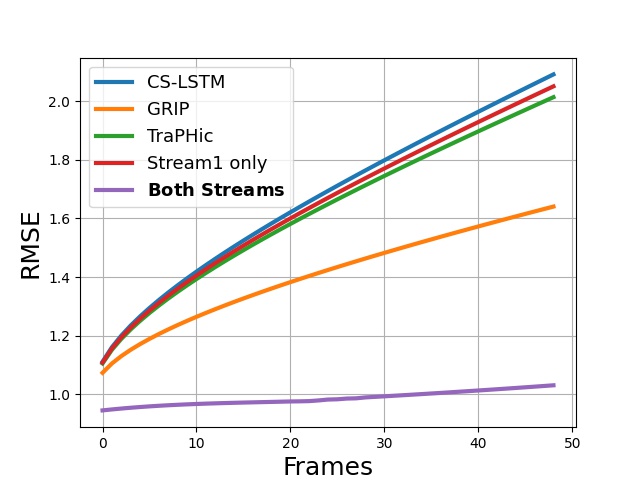}
    \caption{\textbf{RMSE Curves:} We plot the RMSE values for all methods. The prediction window is 5 seconds corresponding to a frame length of 50 for the NGSIM dataset.}
    \label{fig: rmse}
    \vspace{-5pt}
\end{figure}

\subsection{Behavior Prediction Results} 
We follow the behavior prediction algorithm described in Section~\ref{subsec: behavior_protocol}. The values for $\lambda_1$ and $\lambda_2$ are based on the ground truth labels and are hidden from the test set. We observe a weighted accuracy of 92.96\% on the Lyft dataset, 84.11\% on the Argoverse dataset, and 96.72\% on the Apolloscape dataset. In the case of Lyft, Figures~\ref{fig: lyft_gt} and \ref{fig: lyft_pred} show the ground truth and predictions for Lyft, respectively. We plot the value of $\theta^{'}$ on the vertical axis and the road-agent I.D.s on the horizontal axis. More similarity across the two plots indicates higher accuracy. For instance, the red (aggressive) and blue (conservative) dotted regions in~\ref{fig: lyft_gt} and~\ref{fig: lyft_pred} are nearly identical indicating a greater number of correct classifications. Similar results follow for the Apolloscape and Argoverse datasets, which we show in the supplementary material due to lack of space. Due to the lack of diverse behaviors in the NGSIM dataset, we do not perform behavior prediction on the NGSIM.

An interesting observation is that road-agents towards the end of the x-axis appear late in the traffic video while road-agents at the beginning of the x-axis appear early in the video. The variation in behavior class labels, therefore, decreases towards the end of the x-axis. This intuitively makes sense as $\theta^{'}$ for a road-agent depends on the number of distinct neighbors that it observes. This is difficult for road-agents towards the end of the traffic video.

\subsection{Ablation Study of the Radius Parameter ($\mu$)}
\label{subsec: mu_ablation}
We conducted ablation experiments in which we vary the radius parameter $\mu$ (See Section~\mbox{\ref{subsec: spectral graph theory}} for a discussion on $\mu$) from $0$ to $20$. We obtained results on the Apolloscape, Argoverse, Lyft, and NGSIM datasets which we present in Table\mbox{~\ref{tab: mu_ablation}}. We measured the average RMSE values over 3 range intervals: $\mu = 0$,  $0 < \mu \leq 10$, and $10 < \mu \leq 20$. We use range intervals to clearly and succinctly capture the trend of the RMSE values for $\mu>10$ meters and $\mu<10$ meters. We observe that the best performance is achieved from the latter range ($0 < \mu \leq 10$).
\begin{figure}
\centering
\begin{subfigure}[h]{.8\columnwidth}
    \includegraphics[width=\columnwidth]{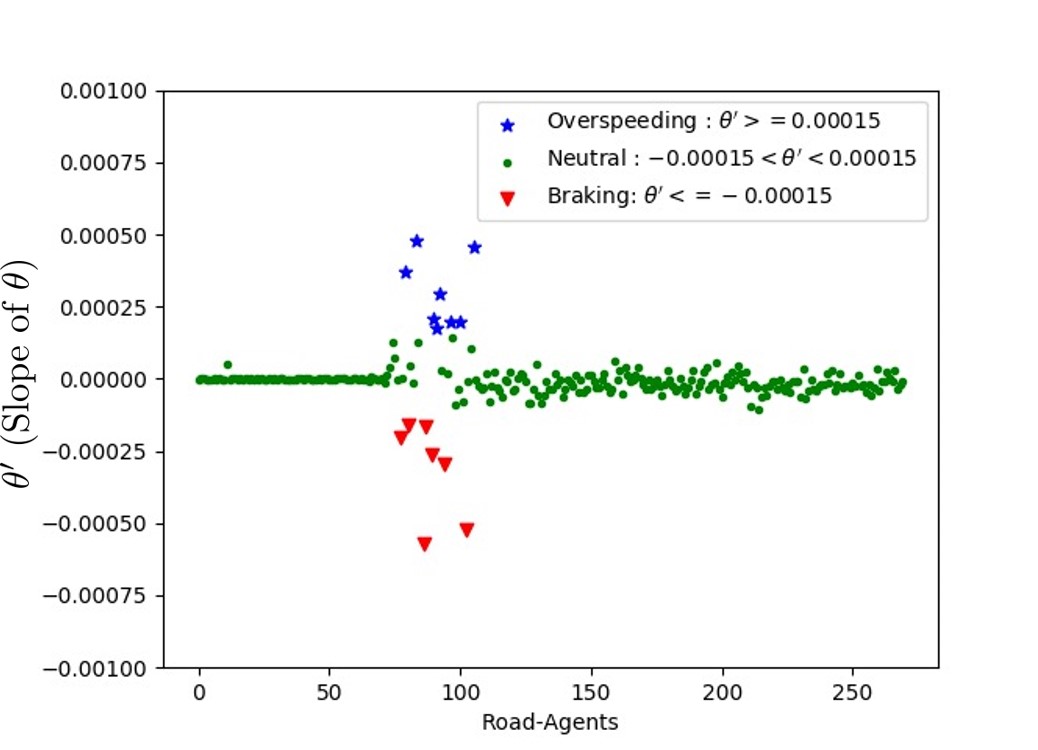}
    \caption{Lyft Ground-Truthwith $\lambda$=$0.00015$.}
    \label{fig: lyft_gt}
  \end{subfigure}
   \begin{subfigure}[h]{.8\columnwidth}
    \includegraphics[width=\columnwidth]{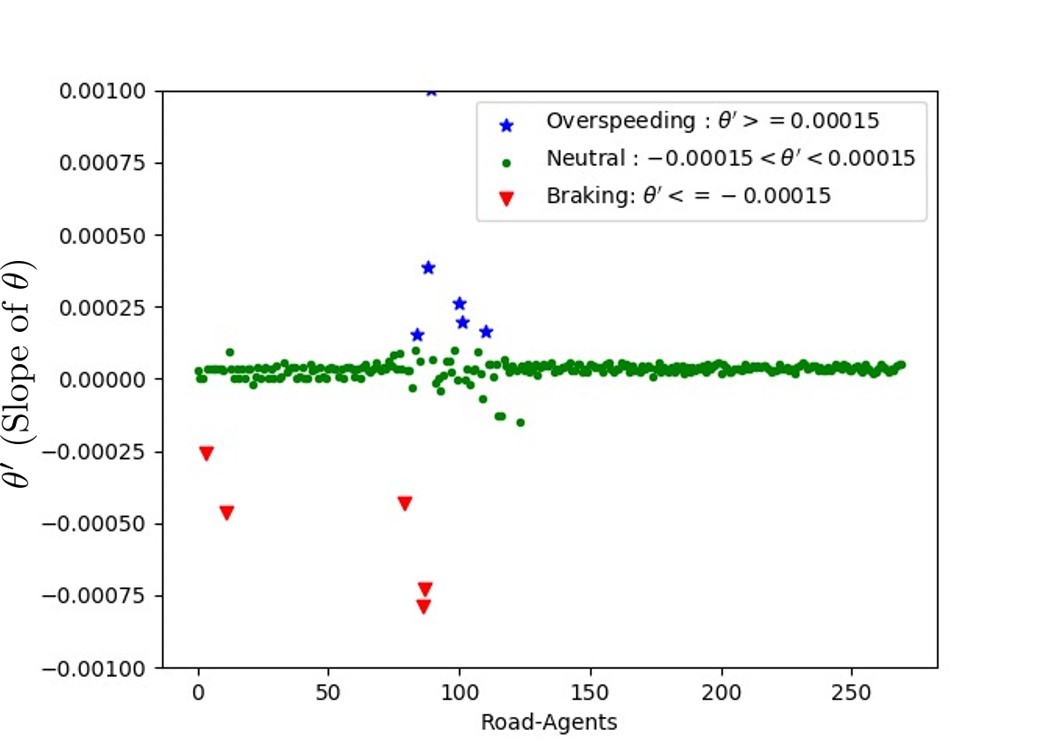}
    \caption{Lyft Behavior Predictions.}
    \label{fig: lyft_pred}
  \end{subfigure}

\caption{\textbf{Behavior Prediction Results:} We classify the three behaviors-- overspeeding(\textcolor{blue}{blue}), neutral(\textcolor{green}{green}), and underspeeding(\textcolor{red}{red}), for all road-agents in the Lyft, Argoverse, and Apolloscape datasets, respectively. The y-axis shows $\theta^{'}$ and the x-axis denotes the road-agents. We follow the behavior prediction protocol described in Section~\ref{subsec: behavior_protocol}. Each figure in the top row represents the ground-truth labels, while the bottom row shows the predicted labels. In our experiments, we set $\lambda = \lambda_1 = - \lambda_2$.}
  \label{fig: behavior}
  \vspace{-8pt}
\end{figure}

        

\begin{table}
  \caption{Ablation experiments of the radius parameter $\mu$. Each column contains averaged RMSE values over the corresponding range interval. \textit{Conclusion:} The optimum results are obtained by setting $0 < \mu \leq 10$ meters.}   
  \centering
  \begin{tabular}{lccc}
  \toprule
    Dataset  & $\mu = 0$ & $0 < \mu \leq 10$ &  $10 < \mu \leq 20$ \\
    \midrule
    Apolloscape & 2.14 & \textbf{1.12} &2.62\\
    Argoverse  &2.40  & \textbf{0.99}&3.15 \\
    Lyft Level 5  &5.77  & \textbf{2.65}&3.36\\
    NGSIM &1.31 &\textbf{0.40}  &2.03\\
    \bottomrule
  \end{tabular}
    \label{tab: mu_ablation}
  \vspace{-12pt}
\end{table}


It is clear that setting $\mu=0$ and thus ignoring neighborhood information in dense traffic severely degrades performance. But on the other hand, increasing the radius beyond $10$ meters also increases the RMSE error. This is because by increasing the radius beyond a certain threshold, we inadvertently include in our spectral clustering algorithm those road-agents that are too far to interact with the ego-agent. In order to accommodate these ``far-away'' road-agents, the clusters expand and shift the cluster center from its true center. This phenomenon is common in statistics where an outlier corrupts the data distribution. The far-away agents are outliers in the spectral clustering algorithm, thereby leading to an increase in RMSE. We conclude that our method produces optimum results for $0 < \mu \leq 10$ in dense traffic systems.
\section{Conclusion, Limitations, and Future Work}

We present a unified algorithm for trajectory prediction and behavior prediction of road-agents. We use a two-stream LSTM network in which the first stream predicts the trajectories, while the second stream predicts the behavior of road-agents. We also present a regularization algorithm to reduce long-term prediction errors.

Our method has some limitations. Currently, we use only one feature to design our behavior prediction model, which may not be able to generalize to new traffic scenarios. In addition, our training is slow and takes several hours due to the number of computations required for computing the traffic-graphs and corresponding Laplacian matrices. We plan to make our behavior prediction model data-driven, rather than rule-based. We will also explore ways to improve trajectory prediction using our behavior prediction algorithm.

\section{Acknowledgements}

This work was supported in part by ARO Grants W911NF1910069 and W911NF1910315, Semiconductor Research Corporation (SRC), and Intel.





{\small
\bibliographystyle{IEEEtran}
\bibliography{refs}
}

\end{document}